\newcommand{\matr}[1]{\mathbf{#1}}  
\newtheorem{theorem}{Theorem}
\newtheorem{proposition}[theorem]{Proposition}
\newtheorem{definition}{Definition}
\newtheorem{example}{Example}
\newcommand{\BibTeX}{B\kern-.05em{\sc i\kern-.025em b}\kern-.08em\TeX}
\newcommand*{\inlineimg}[1]{%
    \raisebox{-0.2\baselineskip}{%
        \includegraphics[
        height=0.9\baselineskip,
        width=0.9\baselineskip,
        keepaspectratio,
        ]{#1}%
    }%
}
\begin{document}


\begin{frontmatter}


\paperid{1593} 


\title{Differentiable Logic Programming for Distant Supervision}


\author[A]{\fnms{Akihiro}~\snm{Takemura}\orcid{0000-0003-4130-8311}\thanks{Corresponding Author. Email: atakemura@nii.ac.jp}}
\author[A]{\fnms{Katsumi}~\snm{Inoue}\orcid{0000-0002-2717-9122}}

\address[A]{National Institute of Informatics, Tokyo, Japan}


\begin{abstract}
We introduce a new method for integrating neural networks with logic programming in Neural-Symbolic AI (NeSy), aimed at learning with distant supervision, in which direct labels are unavailable. 
Unlike prior methods, our approach does not depend on symbolic solvers for reasoning about missing labels. 
Instead, it evaluates logical implications and constraints in a differentiable manner by embedding both neural network outputs and logic programs into matrices. 
This method facilitates more efficient learning under distant supervision. 
We evaluated our approach against existing methods while maintaining a constant volume of training data.
The findings indicate that our method not only matches or exceeds the accuracy of other methods across various tasks but also speeds up the learning process. 
These results highlight the potential of our approach to enhance both accuracy and learning efficiency in NeSy applications.
\end{abstract}

\end{frontmatter}


\section{Introduction}

Neural-Symbolic AI (NeSy) \cite{hitzlerNeuroSymbolicArtificialIntelligence2022,hitzlerCompendiumNeurosymbolicArtificial2023} is a field of research aimed at combining neural networks with symbolic reasoning.
While deep learning is capable of learning complex representations from input-output pairs, it requires a large amount of training data and struggles with tasks that require logical reasoning.
On the other hand, learning with symbolic reasoning can be done with small amounts of data, but it is sensitive to noise and unable to handle non-symbolic data.
In NeSy, it is crucial to combine the roles of neural networks and symbolic reasoning in a way that leverages their respective strengths.

There are various methods for implementing NeSy, including associating the continuous-valued parameters of neural networks (NN) with logical language and using the results of logical reasoning as the value of the loss function (e.g., semantic loss \cite{DBLP:conf/icml/XuZFLB18}).
Also known are methods that combine symbolic solvers with neural networks, e.g., DeepProbLog \cite{manhaeveDeepProbLogNeuralProbabilistic2018} uses Problog for probabilistic logic programming, and NeurASP \cite{yangNeurASPEmbracingNeural2020} uses clingo for answer set programming (ASP). 
These methods that internally call solvers often encapsulate computationally expensive problems such as weighted model counting or enumerating stable models in each iteration during learning.

Alternative methods have been proposed that embed inference traditionally done by symbolic reasoning solvers into vector spaces and perform symbolic reasoning using linear algebra \cite{sakamaLogicProgrammingTensor2021}. 
One such method embeds logic programs into vector spaces and designs appropriate loss functions based on the semantics of non-monotonic reasoning to compute the results of reasoning in a differentiable manner \cite{aspisStableSupportedSemantics2020,takemuraGradientBasedSupportedModel2022}. 
However, these methods have issues such as not being able to directly handle logical constraints and not being applicable to neural network learning as is. 
Thus, in this paper, we propose a method that enables learning in neural networks for NeSy tasks using logical programs that include constraints.

Distant supervision is a method of generating labeled data for learning using rules, external data, or knowledge bases and was proposed by \citet{mintzDistantSupervisionRelation2009} as a method to train classifiers for relation extraction based on information from knowledge bases. 
In NeSy, tasks where label information is provided through symbolic reasoning are commonly used, with MNIST Addition 
 \cite{manhaeveDeepProbLogNeuralProbabilistic2018} being a representative task. 
In this task, pairs of handwritten digits are input, and the goal is to learn the classification of handwritten digits with the sum of the digits provided as the label (e.g., \inlineimg{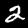}\(+\)\inlineimg{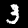}\(=5\)). 
Unlike the usual MNIST classification, in MNIST Addition, the labels are not given for each image individually. 
In this case, the relationship between the sum given as the label and the digits corresponding to the images is expected to be provided through symbolic reasoning.

In this paper, we propose a novel architecture for NeSy systems \cite{manhaeveDeepProbLogNeuralProbabilistic2018,yangNeurASPEmbracingNeural2020} that integrates differentiable logic programming \cite{aspisStableSupportedSemantics2020,takemuraGradientBasedSupportedModel2022} and neural networks.
This paper makes the following contributions:
\begin{enumerate}
    \item We propose a novel architecture that integrates neural networks with logic programming through a differentiable approach. 
    This method facilitates the direct evaluation of logical implications and constraints using differentiable operations, thus enabling effective learning under distant supervision without relying on symbolic solvers for reasoning about missing labels. 
    \item We demonstrate through experiments with a constant volume of training data that our proposed method not only matches but, in some cases, exceeds the accuracy of existing approaches that utilize symbolic solvers. 
    Moreover, we achieved a significant reduction in the training time for neural networks, highlighting substantial gains in computational efficiency.
\end{enumerate}

The structure of this paper is as follows.
After the preliminaries in Section \ref{sec:preliminaries}, Section \ref{sec:semantics} introduces the logic programming semantics in vector spaces.
Section \ref{sec:learning} presents our proposed method for using differentiable logic programming for distant supervision.
Section \ref{sec:experiments} presents the results of experiments and comparison to the state of the art NeSy methods.
Section \ref{sec:relatedwork} covers the related works in the literature.
Finally, Section \ref{sec:conclusion} presents the conclusion.


\section{Preliminaries}\label{sec:preliminaries}

A \textit{normal logic program} \(P\) is a set of \textit{rules} of the form:
\begin{equation} \label{eq:defnlp}
    A \leftarrow A_1 \wedge \dots \wedge A_m \wedge \neg A_{m+1} \wedge \dots \wedge \neg A_n\
\end{equation}
where \(A\) and \(A_i (n \geq m \geq 0)\) are atoms.
In this paper, the terms `normal logic program', `logic program', and `program' are used interchangeably.
An \textit{atom} is a predicate with some arity, e.g., \(p(X,Y)\), where variables are represented by upper case characters, and predicates and constants are represented by lower case characters.
A literal is either an atom \(p\), or its negation \(\neg p\).
The atom \(A\) in (\ref{eq:defnlp}) is the \textit{head} and \(\{A_1, \dots, A_n\}\) is the \textit{body} of a rule.
For each rule \(R_i\) of the form (\ref{eq:defnlp}), define \(head(R_i)=A\), \(body^{+}(R_i)=\{ A_1, \dots, A_m \}\) and \(body^{-}(R_i)=\{ A_{m+1}, \dots, A_n \}\).

The \textit{Herbrand universe} of a logic program \(P\) is the set of all ground terms in the language of \(P\), i.e., terms composed of function symbols and constants that appear in \(P\).
The \textit{Herbrand base} \(B_P\) is the set of atoms that can be formed from the relations of the program and terms in the Herbrand universe.
We assume that the Herbrand base \(B_P\) of a program to be lexicographically ordered.

A rule with an empty body is a \textit{fact}.
A program \(P\) is \textit{definite} if no rule in \(P\) contains negation as failure.
A program, a rule, or an atom is ground if it is variable free.
A program \(P\) is semantically identified with its ground instantiation, \(ground(P)\), by substituting variables in \(P\) by elements of its Herbrand universe in every possible way.

An \textit{interpretation} \(I \subseteq B_P\) satisfies a rule \(R_i\) of the form (\ref{eq:defnlp}) if \(body^{+}(R_i) \subseteq I\) and \(body^{-}(R_i) \cap I = \emptyset\) imply \(A \in I\).
An interpretation that satisfies every rule in a program \(P\) is a \textit{model} of the program.
A model of a program is \textit{supported} if for each atom \(p \in I\), there exists a ground rule such that \(I\) satisfies its body \cite{aptTheoryDeclarativeKnowledge1988}.
A model \(M\) is minimal if there is no model \(J\) of \(P\) such that \(J \subset M\).
A definite program has a unique minimal model, which is the least model.

Given a normal logic program \(P\) and an interpretation \(I\), the \textit{reduct} \(P^I\), which is a ground definite program, is constructed as follows: a ground rule \(A \leftarrow L_1,\dots,L_m\) is in \(P^I\) iff there is a ground rule of the form (\ref{eq:defnlp}) such that \(body^{-}(R_i) \cap I = \emptyset\).
If the least model of \(P^I\) is identical to \(I\), then \(I\) is a \textit{stable model} of \(P\) \cite{gelfondStableModelSemantics1988}.
For a definite program, the stable model coincides with the least model.
A stable model is always supported, but the converse does not hold in general.

Supported models can be computed as the models of \textit{Clark's completion} \cite{clarkNegationFailure1978}.
Let \(heads(P, a)\) be the set of rules in \(P\) whose head is \(a\). 
The completion of \(P\), denoted \(comp(P)\), is the set of clauses 
\begin{equation}\label{eq:defcomp}
    a \leftrightarrow \bigvee_{R_i \in heads(P, a)} body(R_i)
\end{equation}
for all \(a \in B_P\). 
A model of \(comp(P)\) is a supported model of \(P\) \cite{aptTheoryDeclarativeKnowledge1988}.

Let \(I \subseteq B_P\) be an interpretation of \(P\). 
The relation \(\models\) is defined as follows: for a rule \(R_i\) of the form (\ref{eq:defnlp}), \(I\) \textit{satisfies} \(R_i\) if \(head(R_i) \cap I \neq \emptyset\) whenever \(body(R_i) \subseteq I\), and denoted as \(I \models R_i\); for a program \(P\), \(I\) satisfies \(P\) if \(I \models R_i\) for all \(R_i \in P\); for a formula \(F = F_1 \vee \dots \vee F_k \ (k \geq 0)\), \(I \models F\) iff there is a \(F_i\ (k \geq i \geq 1)\) such that \(I \models F_i\), i.e., the empty disjunction is false.
Let \(comp(R_p)\) denote the completed rule (\(p \leftrightarrow body(R_{p1}) \vee \dots \vee body(R_{pj})\)) for the atom \(p\), then \(p \in I\) iff \(I \models comp(R_p)\).


\section{Semantics}\label{sec:semantics}

In this section, we consider the semantics of ground normal logic programs in vector spaces.
First, we introduce the necessary notations. 
Matrices are denoted using bold uppercase letters (\(\mathbf{M}\)), and vectors are denoted using bold lowercase letters (\(\mathbf{v}\)). 
The element in the \(i\)-th row and \(j\)-th column of a matrix is denoted by \(\mathbf{M}_{ij}\), and the \(i\)-th element of a vector is denoted by \(\mathbf{v}_i\).
The slice of the \(i\)-th row of a matrix is denoted by \(\mathbf{M}_{i:}\), and the slice of the \(j\)-th column is denoted by \(\mathbf{M}_{:j}\).
Variables are denoted by upper case letters, and constants and predicates are denoted by lower case letters; e.g., in \(sum(L)\), \(L\) is a variable and \(sum/1\) is a predicate with arity 1.

\subsection{Embedding Normal Logic Programs}

Given a ground normal logic program \(P\), we introduce two matrices that jointly represent the program.
The program matrix represents the bodies of the rules in the program, and the head matrix represents their disjunctions.
This is an alternative formulation to the embedding approach described by \citet{sakamaLogicProgrammingTensor2021}.
\begin{definition}[Program Matrix]
    Let \(P\) be a ground normal logic program with \(R\) rules and the size of its Herbrand base be \(|B_P|=N\).
    Then \(P\) is represented by a binary matrix \(\matr{Q} \in \{0,1\}^{R \times 2N}\) such that \(i\)-th row corresponds to the body of the \(i\)-th rule \(R_i\): \(\matr{Q}_{ij}=1\) if \(a_j \in body^{+}(R_i)\), \(\matr{Q}_{i(N+j)}=1\) if \(a_j \in body^{-}(R_i)\), and \(\matr{Q}_{ij}=0\) otherwise.
\end{definition}
\begin{definition}[Head Matrix]
    Let \(\mathbf{D} \in \{0,1\}^{(N \times R)}\) be the head matrix associated with \(P\).
    Then the element \(\mathbf{D}_{ji} = 1\) if the head of rule \(R_i (1 \leq i \leq R)\) is \(a_j (1 \leq j \leq N)\), and \(0\) otherwise.
\end{definition}

\begin{example}\label{ex:program_1}
    Consider the following program \(P_1\) with 3 rules:
    \begin{equation}
        (R_1)\  a \leftarrow c \wedge \neg b \qquad (R_2)\  a \leftarrow a \qquad (R_3)\  b \leftarrow \neg a
    \end{equation}
    \(P_1\) is encoded into a pair of matrices \(\left(\matr{Q}, \matr{D}\right)\):
    \begin{equation}
        \let\quad\enspace
        \matr{Q}  = \bordermatrix{ & a & b & c & \neg a & \neg b & \neg c \cr
                               R_1 & 0 & 0 & 1 & 0      & 1      & 0      \cr
                               R_2 & 1 & 0 & 0 & 0      & 0      & 0      \cr
                               R_3 & 0 & 0 & 0 & 1      & 0      & 0      
        } \;
        \matr{D}  = \bordermatrix { & R_1 & R_2 & R_3 \cr
                                  a & 1   & 1   & 0   \cr
                                  b & 0   & 0   & 1   \cr
                                  c & 0   & 0   & 0   
        }
    \end{equation}
\end{example}

\(\matr{Q}\) represents the bodies of the rules, which are the conjunctions of the literals appearing in the bodies.
For example, \(\matr{Q}_{1:}\) represents the body of \(R_1\), \(\left(c \wedge \neg b\right)\).
\(\matr{D}\) represents the disjunctions of the bodies of the rules sharing the same head.
For example, \(\matr{D}_{1:}\) represents the disjunction \(body(R_1) \vee body(R_2) = \left(c \wedge \neg b\right) \vee a\).
Together, \(\matr{Q}\) and \(\matr{D}\) represent the logic program \(P\).

\subsection{Evaluating Embedded Normal Logic Programs}

We consider the conjunction appearing in the bodies of the rules as the negation of disjunctions of negated literals using De Morgan's law, i.e., \(L_1 \wedge \dots \wedge L_n = \neg \left( \neg L_1 \vee \dots \vee \neg L_n \right)\).
This means that when evaluating the body of a rule, instead of checking whether all literals hold (as in \cite{takemuraGradientBasedSupportedModel2022}), we can count the number of false literals and check whether the count exceeds 1.
To this end, we introduce a piecewise linear function \(\mathrm{min}_1(x) = \mathrm{min}(x,1)= \mathrm{ReLU}(1-x)\), which gives 1 for \(x \geq 1\).
This function is almost everywhere differentiable (except at \(x=1\)), which allows gradient-based optimization to be applied effectively.

To evaluate normal logic programs in vector spaces, we introduce the vectorized counterparts of interpretation and model.
\begin{definition}[Interpretation Vector]
    Let \(P\) be a ground normal logic program.
    An interpretation \(I \subseteq B_P\) is represented by a binary vector \(\matr{v} = (\matr{v}_1,\dots,\matr{v}_N)^\intercal \in \mathbb{Z}^{N}\) where each element \(\matr{v}_i\,(1\leq i \leq N)\) represents the truth value of the proposition \(a_i\) such that \(\matr{v}_i=1\) if \(a_i \in I\), otherwise \(\matr{v}_i=0\). 
    We assume propositional variables share the common index such that \(\matr{v}_i\) corresponds to \(a_i\), and we write \(\mathrm{idx}(\matr{v}_i) = a_i\).
\end{definition}
\begin{definition} [Complementary Interpretation Vector]
     The complementary interpretation vector \(\matr{w} \in \mathbb{Z}^{2N}\) is a binary vector, which is a concatenation of the interpretation vector \({\matr{v}}\) and its complement: \(\matr{w} = [\matr{v};\matr{1}_N-\matr{v}]\).
\end{definition}

\begin{proposition}(Embedding Models of Normal Logic Programs) \label{prop:embed_model}
    Let \(\matr{P} = (\matr{Q}, \matr{D})\) be an embedding of a ground normal logic program \(P\), \(dist(\matr{\cdot},\matr{\cdot})\) be a distance function in a metric space, \(\matr{v}\) be an interpretation vector representing \(I \subseteq B_P\), and \(\matr{w}\) be its complementary interpretation vector.
    Then, for an interpretation vector \(\matr{v}\),
    \begin{equation*}
        I \models comp(P) \ \mathrm{iff}\ dist\bigg(\matr{v}, \mathrm{min_1}\bigg(\mathbf{D}\Big(\mathbf{1}-\mathrm{min_1}\big(\mathbf{Q}(\mathbf{1}-\mathbf{w})\big)\Big)\bigg)\bigg) = 0
    \end{equation*}
\end{proposition}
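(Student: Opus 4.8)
The plan is to unfold the right-hand side from the inside out, identifying the logical content of each intermediate vector, and then to match it against the definition of $I \models comp(P)$ recalled in the preliminaries. Since $dist$ is a metric on the embedding space, $dist(\matr{v}, \matr{u}) = 0$ holds iff $\matr{v} = \matr{u}$, so it suffices to prove $I \models comp(P)$ iff $\matr{v} = \mathrm{min}_1\big(\mathbf{D}(\mathbf{1} - \mathrm{min}_1(\mathbf{Q}(\mathbf{1}-\mathbf{w})))\big)$; and because every vector appearing here has entries in $\{0,1\}$, this equality is equivalent to a componentwise biconditional.

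First I would observe that $\mathbf{1} - \mathbf{w} = [\mathbf{1}_N - \matr{v};\, \matr{v}]$, so its first $N$ entries flag the atoms false in $I$ and its last $N$ entries flag the atoms true in $I$. Reading off the definition of $\mathbf{Q}$, the $i$-th entry of $\mathbf{Q}(\mathbf{1}-\mathbf{w})$ equals $\sum_{a_j \in body^{+}(R_i)}(1 - \matr{v}_j) + \sum_{a_j \in body^{-}(R_i)} \matr{v}_j$, which is exactly the number of literals in the body of $R_i$ that $I$ fails to satisfy. This is the De Morgan reformulation mentioned just before the proposition: a conjunction of literals is false iff at least one of the negated literals is true, i.e.\ iff this count is at least $1$, so applying $\mathrm{min}_1$ yields the indicator that $I$ does not satisfy the body of $R_i$, a body being satisfied precisely when $body^{+}(R_i) \subseteq I$ and $body^{-}(R_i) \cap I = \emptyset$. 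For a fact the count is the empty sum $0$, matching the fact that an empty body is vacuously satisfied. Hence $\mathbf{b} := \mathbf{1} - \mathrm{min}_1(\mathbf{Q}(\mathbf{1}-\mathbf{w}))$ is the vector with $\mathbf{b}_i = 1$ iff $I$ makes the body of $R_i$ true.

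Next, by the definition of $\mathbf{D}$, the $j$-th entry of $\mathbf{D}\mathbf{b}$ equals $\sum_{R_i \in heads(P, a_j)} \mathbf{b}_i$, the number of rules with head $a_j$ whose body holds under $I$; for an atom that heads no rule this is the empty sum $0$. Applying $\mathrm{min}_1$ gives the indicator that at least one such rule fires, i.e.\ the truth value under $I$ of the disjunction $\bigvee_{R_i \in heads(P, a_j)} body(R_i)$, consistent with the stated convention that the empty disjunction is false. Now using the characterization $a_j \in I$ iff $I \models comp(R_{a_j})$ from the preliminaries, the componentwise identity $\matr{v}_j = \mathrm{min}_1(\mathbf{D}\mathbf{b})_j$ holds iff $a_j \in I$ is equivalent to $I \models \bigvee_{R_i \in heads(P, a_j)} body(R_i)$, which is exactly $I \models comp(R_{a_j})$. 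Conjoining over all $j \in \{1,\dots,N\}$ yields $I \models comp(P)$, and chaining the equivalences back to $dist(\cdot,\cdot) = 0$ finishes the argument.

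I do not expect a genuine obstacle; the work is essentially bookkeeping. The two points that need care are (i) verifying that the two nested applications of $\mathrm{min}_1$ realize, respectively, the predicate that the body of $R_i$ contains a literal false in $I$ and the predicate that $a_j$ has a rule whose body holds in $I$ — this is exactly where the De Morgan counting idea does its work — and (ii) the degenerate cases, namely facts, atoms defined by no rule, and the empty disjunction, where one must check that the matrix arithmetic (empty sums equal to $0$) and Clark's completion agree. Once these are in place, the reduction from $dist(\cdot,\cdot) = 0$ to the componentwise biconditional is immediate, since all vectors involved are $\{0,1\}$-valued.
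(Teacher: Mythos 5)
Your proposal is correct and follows essentially the same route as the paper's proof: unfold the inner $\mathrm{min}_1(\mathbf{Q}(\mathbf{1}-\mathbf{w}))$ as a De Morgan count of false body literals, interpret $\mathbf{D}$ applied to the complemented result as the per-head disjunction of satisfied bodies, and then match $\matr{v}$ against the resulting indicator vector componentwise via the completion $comp(R_{a_j})$. Your explicit treatment of the degenerate cases (facts, atoms heading no rule, the empty disjunction) and of the metric reduction $dist=0$ iff vector equality is slightly more careful than the paper's write-up, but it is the same argument.
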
  
\begin{proof} (Sketch; full proof in the Appendix \cite{takemuraDifferentiableLogicProgramming2024}.)
    A row slice of the program matrix \(\matr{Q}_{i:}\) corresponds to the body of a rule \(R_i\), so the matrix-vector products \(\matr{Q}_{i:}\matr{w}\) and \(\matr{Q}_{i:}(\matr{1}-\matr{w})\) computes the number of true and false literals in \(I\), respectively.
    The conjunctions can be computed as the negation of disjunctions of negated literals using De Morgan's law, i.e., \(\matr{1}-\mathrm{min_1}(\mathbf{Q}_{i:}(\mathbf{1}-\mathbf{w}))\).
    
    Let \(\{R^{a_i}\} = \{a_i \leftarrow B_j,\dots,a_i \leftarrow B_t\}\) be the set of rules that share the same head atom \(a_i\), where \(B_j\) denote the rule bodies, and \(B_j \vee \dots \vee B_t\) be the disjunction of the rule bodies.
    By construction of the head matrix \(\matr{D}\), \(\matr{D}_{i:}(\matr{1}-\mathrm{min_1}(\mathbf{Q}(\mathbf{1}-\mathbf{w})))\) computes the number of true rule bodies that share the same head.
    Thus, \(\matr{h}_i = \mathrm{min_1}\bigg(\mathbf{D}_{i:}\Big(\mathbf{1}-\mathrm{min_1}\big(\mathbf{Q}(\mathbf{1}-\mathbf{w})\big)\Big)\bigg) = 1\) if there is at least one rule body that is true in \(I\) and in the disjunction \(B_j \vee \dots \vee B_t\), and 0 otherwise.
    Then computing \(\matr{h}_i\) corresponds to the evaluation of \(I \models comp(R_{a_i})\).
    This can be generalized to the entire matrix.
    

    Let \(\matr{h}^I = \mathrm{min_1}\bigg(\mathbf{D}\Big(\mathbf{1}-\mathrm{min_1}\big(\mathbf{Q}(\mathbf{1}-\mathbf{w})\big)\Big)\bigg)\), then the second part of the iff relation is simplified to \(dist(\matr{v}, \matr{h}^I)) = 0\).
    \begin{itemize}
        \item If \(I \models comp(P)\), then \(dist(\matr{v}, \matr{h}^I) = 0\). \\
        Suppose \(I \models comp(R_{a_i})\), then there is at least one rule body that is true in \(I\), so \(\matr{h}_i^{I} = 1\).
        Otherwise, when we have \(I \not\models comp(R_{a_i})\), \(\matr{h}_i^{I} = 0\).
        Therefore, it holds that \(\matr{h}_i^{I} = \matr{v}_i\), and since the index \(i\) is arbitrary, we have \(\matr{v} = \matr{h}_i^{I}\), i.e., \(dist(\matr{v}, \matr{h}^I) = 0\).
        
        \item If \(dist(\matr{v}, \matr{h}^I) = 0\), then \(I \models comp(P)\). \\
        Consider \(dist(\matr{v}, \matr{h}^I) = 0\). 
        For \(\matr{h}^I_i = 1\), there is at least one rule body that is true in \(I\), and for \(\matr{h}^I_i = 0\), there is no rule body that is true in \(I\). 
        Since we have \(\matr{v}_i = \matr{h}^I_i\), for \(\matr{v}_i = \matr{h}^I_i = 1\), \(a_i \leftrightarrow \bigvee_{R_j \in heads(P, a_i)} body(R_j)\) is satisfied and denote \(I \models comp(R_{a_i})\), and for \(\matr{v}_i = \matr{h}^I_i = 0\) denote \(I \not\models comp(R_{a_i})\).
        Since the index \(i\) is arbitrary, we conclude \(I \models comp(P)\).
    \end{itemize}
\end{proof}

\begin{example}\label{ex:program1_implication} (Example \ref{ex:program_1} contd.)
    Consider the program \(P_1\) and corresponding matrices \(\matr{Q}\) and \(\matr{D}\) from Example \ref{ex:program1_implication}.
    This program has 2 supported models \(\{ \{a\}, \{b\} \}\).
    Let \(\matr{v}^{\{a\}} = (1 \; 0 \; 0 )^{\top}\) represent the interpretation \(\{a\}\), and \(\matr{w}\) be its complementary interpretation vector.
    We have \(\matr{h}^I = (1 \; 0 \; 0 )^{\top} = \matr{v}\), and take the Euclidean distance: \(dist(\matr{v},\matr{h}^I)=\sqrt{\sum_{i=1}^{3}(\matr{v}_i - \matr{h}^I_i)} = 0\).
    Therefore, according to Proposition \ref{prop:embed_model}, \(I \models comp(P_1) \).
\end{example}

The vector \(\matr{h} = \mathrm{min_1}\bigg(\mathbf{D}\Big(\mathbf{1}-\mathrm{min_1}\big(\mathbf{Q}(\mathbf{1}-\mathbf{w})\big)\Big)\bigg)\) serves as the \textit{head vector}, which is an indicator vector representing true atoms following the evaluation of rule bodies in the logic program.
This will be used later to define the loss function in Section \ref{sec:lossfunction}.

\subsection{Embedding and Evaluating Constraints}

A constraint is a rule with an empty head, e.g., \(\leftarrow a \wedge b\) represents a constraint where \(a\) and \(b\) must not both be true simultaneously.
Since constraints are rules in a program, we embed them into a constraint matrix \(\matr{C}\) in the same manner as the program matrix \(\matr{P}\).
Note that we do not require the head matrix because constraints have empty heads.
\begin{definition}[Constraint Matrix]
    Let \(C = \{C_1,\dots,C_k\}=\{\leftarrow B_1,\dots,\leftarrow B_k\}\) be the set of constraints in a program \(P\) with \(|B_P|=N\).
    Then the matrix corresponding to the constraints is \(\matr{C} \in \{0,1\}^{(k \times 2N)}\) such that \(i\)-th row corresponds to the body of the \(i\)-th constraint \(C_i\): \(\matr{C}_{ij}=1\) if \(a_j \in body^{+}(C_i)\), \(\matr{C}_{i(N+j)}=1\) if \(a_j \in body^{-}(C_i)\) and \(\matr{C}_{ij}=0\) otherwise.
\end{definition}

To evaluate the constraints, we check whether the bodies of the constraint rules are in \(I\): given a constraint \(R_i\), if \(body(R_i) \subseteq I\) then the constraint is violated; otherwise it is satisfied.
\begin{proposition}(Evaluating Constraints)
    Let \(\matr{C}\) be an embedding of constraints \(C\), \(dist(\cdot,\cdot)\) be a distance function in a metric space, \(\matr{v}\) be an interpretation vector representing \(I \subseteq B_P\), and \(\matr{w}\) be its complementary interpretation vector.
    Then, for an interpretation vector \(\matr{v}\), it holds that 
    \(I \not\models C\) iff \(dist(\matr{1}, \mathrm{min}_1(\matr{C}(\matr{1}-\matr{w})))=0\).
\end{proposition}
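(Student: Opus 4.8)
The plan is to mirror the structure of the proof of Proposition~\ref{prop:embed_model}, specializing the matrix–vector evaluation to the constraint matrix \(\matr{C}\) and establishing each direction of the biconditional separately. First I would unfold the inner expression row by row. Writing \(\matr{1}-\matr{w} = [\matr{1}_N-\matr{v}\,;\,\matr{v}]\), the first \(N\) entries flag the atoms that are false in \(I\) and the last \(N\) entries flag those that are true. Because the \(i\)-th row \(\matr{C}_{i:}\) places a \(1\) at column \(j\) for each \(a_j \in body^{+}(C_i)\) and at column \(N+j\) for each \(a_j \in body^{-}(C_i)\), the product \((\matr{C}(\matr{1}-\matr{w}))_i\) counts exactly the literals of \(body(C_i)\) that are false in \(I\): a positive literal \(a_j\) contributes \(1-\matr{v}_j\) and a negative literal \(\neg a_j\) contributes \(\matr{v}_j\). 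This bookkeeping is the key computational step, and it is the De Morgan dual of the body evaluation already used for \(\matr{Q}\).

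Next I would push this count through \(\mathrm{min}_1\). Since the count is a nonnegative integer, \(\mathrm{min}_1((\matr{C}(\matr{1}-\matr{w}))_i)\) equals \(0\) precisely when \(body(C_i)\) contains no false literal, i.e.\ when \(body^{+}(C_i)\subseteq I\) and \(body^{-}(C_i)\cap I = \emptyset\), which is exactly \(body(C_i)\subseteq I\); and it equals \(1\) as soon as at least one body literal is false in \(I\). Translating through the satisfaction relation of the Preliminaries, the \(i\)-th coordinate of \(\mathrm{min}_1(\matr{C}(\matr{1}-\matr{w}))\) is thus a per-constraint indicator whose value records whether \(C_i\) is violated or not, and which I would compare coordinatewise against the target \(\matr{1}_k\).

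With this indicator in hand I would prove the two implications. For the forward direction I would begin from \(I\not\models C\), determine the coordinates of the indicator vector that this pins down, and trace the resulting value of the distance. For the converse I would assume the distance is zero, which by the metric axioms is equivalent to \(\mathrm{min}_1(\matr{C}(\matr{1}-\matr{w})) = \matr{1}_k\) coordinatewise, and then decode each coordinate back into the satisfaction status of the corresponding \(C_i\), aggregating over \(i = 1,\dots,k\) to obtain the membership claim. As in Example~\ref{ex:program1_implication}, the particular choice of \(dist(\cdot,\cdot)\) is immaterial: only the vanishing of a metric, i.e.\ exact coordinatewise equality with \(\matr{1}_k\), is used.

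The step I expect to be the main obstacle, and the one demanding the most care, is fixing the polarity of the biconditional: precisely which of the two values of the \(\mathrm{min}_1\) indicator corresponds to a violated constraint, and hence whether matching the all-ones target \(\matr{1}_k\) witnesses the violation condition \(I\not\models C\) or its negation \(I\models C\). Because the encoding counts \emph{false} body literals and then compares against \(\matr{1}_k\) rather than against \(\matr{0}_k\), and because the quantifier over the \(k\) constraints interacts with this choice, the direction of the correspondence must be tracked coordinate by coordinate; getting this alignment exactly right — so that the derived equivalence lands on the stated side \(I\not\models C\) — is the crux, since any sign slip in the De Morgan negation or in the target vector would flip the claimed equivalence.
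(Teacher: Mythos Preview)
Your proposal is correct and mirrors the paper's proof: compute the indicator vector row by row, show that its \(i\)-th entry equals \(1\) exactly when \(body(C_i)\) contains a literal false in \(I\), and then use the metric axiom \(dist(\matr{x},\matr{y})=0 \Leftrightarrow \matr{x}=\matr{y}\) to reduce the distance condition to coordinatewise equality with \(\matr{1}_k\). Your flagged concern about the polarity of \(I \not\models C\) versus the all-ones target is legitimate---the paper's own proof treats this point tersely and with the same sign convention you anticipate---but the mechanics you outline coincide with the paper's argument.
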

\begin{proof}
    Proved similarly to Proposition 1.
    Let \(\matr{c}^I = \mathrm{min_1}(\mathbf{C}(\mathbf{1}-\mathbf{w}))\).
    Consider the \(i\)-th constraint \(C_i\) and the corresponding row slice \(\matr{C}_{i:}\).
    The existence of at least one false literal in the body is computed by \(\matr{c}_i^I = \mathrm{min_1}(\mathbf{C}_{i:}(\mathbf{1}-\mathbf{w}))\), where \(\matr{c}_i^I = 1\) if there is a false literal and \(\matr{c}_i^I = 0\) otherwise, i.e., when \(\matr{c}_i^I = 0\), the body is satisfied and the constraint is violated.

    Suppose \(I \not\models C_i\), then there is at least one false literal in the body of \(C_i\), so \(\mathrm{min}_1(\matr{C}_{i:}(\matr{1}-\matr{w})))=1\).
    Repeat this for all \(C_i \in C\), we obtain a 1-vector, which means there is at least one false literal in the bodies of all constraints.
    By definition, \(dist(\matr{1},\matr{1})=0\). The converse can be proved similarly.
    
        
\end{proof}

\begin{example}\label{ex:constraint}
    Consider the constraint \(\leftarrow a \wedge b\). Then we have:
    \begin{equation}
        \let\quad\enspace
        \matr{C}  = \bordermatrix{ & a & b & c & \neg a & \neg b & \neg c \cr
                               C_1 & 1 & 1 & 0 &      0      & 0      & 0        
        } \;
    \end{equation}
    We examine two scenarios: in the first, the constraint is violated, and in the second, it is not.
    Let \(\matr{c}^I = \mathrm{min_1}(\mathbf{C}(\mathbf{1}-\mathbf{w}))\) and \(dist(\cdot,\cdot)\) denote the Euclidean distance, then, for the following cases,
    \begin{itemize}
        \item \(\matr{v}^{\{a,b\}} = (1\; 1\; 0)\): we obtain \(\matr{c}^{\{a,b\}} = (0)\), and \(dist(\matr{1},\matr{c}^{\{a,b\}})=1\), so we conclude that the constraint is violated.
        \item \(\matr{v}^{\{a\}}   = (1\; 0\; 0)\): we obtain \(\matr{c}^{\{a\}} = (1)\), and \(dist(\matr{1},\matr{c}^{\{a,b\}})=0\), so we conclude \(I \not\models C\).
    \end{itemize}
\end{example}

For later use in the loss function (Section \ref{sec:lossfunction}), we define the \textit{constraint violation vector} \(\matr{c}'\) as \(\matr{c}' = \matr{1} - \matr{c} = \matr{1} - \mathrm{min}_1(\matr{C}(\matr{1}-\matr{w}))\).
Intuitively, this modification turns \(\matr{c}\) into an indicator vector, where each element \(\matr{c}'_i = 1\) means that the \(i\)-th constraint is violated.

\section{Learning with Differentiable Logic Program}\label{sec:learning}

In this section, we show how the aforementioned differentiable logic program semantics can be used to train neural networks.
Although our method supports both implication and constraint rules, it is not always necessary to use both of them for learning, as we shall show later in the experimental section.
Specifically, for the NeSy tasks we studied, using exclusively either one of implication or constraint rules is enough to achieve competitive accuracy.
On the other hand, in NeurASP \cite{yangNeurASPEmbracingNeural2020} for example, the observation atoms are typically given as integrity constraints in ASP rules to compute stable models, and implication rules are defined similarly to ours.
Consequently, we included a combination of both implication rules and constrains in our experiments to provide a thorough evaluation.

\subsection{Example: MNIST Addition}

The MNIST digit addition problem \cite{manhaeveDeepProbLogNeuralProbabilistic2018} is a simple distant supervision task that is commonly used in neural-symbolic literature.
The input consists of two images, and the output is their sum (e.g., \inlineimg{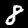}, \inlineimg{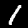}, \(9\)). 
The goal is to learn digit classification from the sum of digits, rather than from images with individually labeled digits, as in the usual MNIST digit classification.
For brevity, we shall focus on the single digit additions in this section.

We first introduce \textit{neural predicates} \cite{manhaeveDeepProbLogNeuralProbabilistic2018}, which act as an interface between the neural and logic programming parts.
More concretely, a neural predicate is a predicate which can take references to tensorized objects as arguments.
A \textit{neural atom} is an instance of a neural predicate, representing a specific combination of values or variables.
For example, in MNSIT Addition, we define two neural predicates, \(obs/4\) and \(label/3\).
\(obs(i_1,D_1,i_2,D_2)\) represents a situation where images \(i_1\) and \(i_2\) were classified as digits \(D_1\) and \(D_2\) ranging from 0 to 9, respectively.
\(label(i_1,i_2,S)\) represents the two images and their sum \(S\) ranging from 0 to 18.
Thus, we obtain 100 \(obs\) and 19 \(label\) neural atoms.

\subsubsection{Implication rules}

In general, we expect the label atoms to appear in the heads of the implication rules so that we can compare the output of the logic programming component with the label using a loss function such as binary cross entropy.
For a single digit MNIST Addition, the label is the integer values between 0 and 18 represented by the \(label/3\) predicate.
As for the individual rules, we enumerate the possible combinations of digits that sum to the given label, e.g.,
\begin{align}\label{pgm:mnist_implication}
    label(i_1,i_2,0) &\leftarrow obs(i_1,0,i_2,0). \nonumber \\ 
    label(i_1,i_2,1) &\leftarrow obs(i_1,0,i_2,1). \nonumber \\
    label(i_1,i_2,1) &\leftarrow obs(i_1,1,i_2,0). \nonumber \\
    \dots \nonumber \\
    label(i_1,i_2,18) & \leftarrow obs(i_1,9,i_2,9).
\end{align}
In this way, 100 rules with \(label/3\) in the heads can be instantiated, which results in the embedded program \(\matr{P}_{impl.}=(\matr{Q}_{impl.},\matr{D}_{impl.})\) where \(\matr{Q}_{impl.} \in \{0,1\}^{(100 \times 200)}\) and \(\matr{D}_{impl.} \in \{0,1\}^{(19 \times 100)}\).

\subsubsection{Constraints}

In the case of MNIST Addition, constraints can be represented with smaller number of rules compared to the implication rules, e.g.,
\begin{align}\label{pgm:mnist_constraint}
    & \leftarrow label(i_1,i_2,0) \wedge \neg obs(i_1,0,i_2,0). \nonumber \\
    & \leftarrow label(i_1,i_2,1) \wedge \neg obs(i_1,0,i_2,1) \wedge \neg obs(i_1,1,i_2,0). \nonumber \\
    & \dots \nonumber \\
    & \leftarrow label(i_1,i_2,18) \wedge \neg obs(i_1,9,i_2,9).
\end{align}
Intuitively, one can read the first rule as ``when the label is 0, both of the digits must be 0''.
This essentially amounts to enumerating all possible combinations of digits that sum to the label and adding them as negative literals to the rule bodies.
Preparing constraints for each label results in a constraint matrix \(\matr{C} \in \{0,1\}^{(19 \times 238)}\).

\subsubsection{Handling Neural Network Outputs}

Here, the outputs of the neural network combined with facts evident from the problem and labels are treated as a continuous interpretation.
Facts evident from the problem refer to, for example, the three digits (\(d_1,d_2,d_3\)) in the Apply2x2 task or the given digit number in the Member task.
The details of the Apply2x2 and Member tasks will be explained later in the experiment section.
The label information is also incorporated into this continuous-valued interpretation vector. 
\begin{definition}[Continuous-valued Interpretation Vector]
    \normalfont{}
    Let \(\matr{x} \in [0,1]^N\) be the output passed through the last layer of the neural network. 
    Let \(\matr{f} \in \{0,1\}^N\) represent facts from the problem that are not dependent on NN's output, and \(\matr{lb} \in \{0,1\}^N\) represent the label information available from the instance.
    If the number of elements in \(\matr{x}, \matr{f}\) or \(\matr{lb}\) is less than \(N\), pad appropriately with zeros. 
    The continuous-valued interpretation vector \(\matr{z}\) is computed as follows: \( \matr{z} = \matr{x} + \matr{f} + \matr{lb} \).
\end{definition}

In MNIST Addition, facts are not available from the problem settings, so we only focus on the neural network outputs and labels.
In this task, the inputs to the (convolutional) neural network are two images \((i_1,i_2)\).
After passing through the Softmax activation, we obtain two output vectors \(\matr{x}_1, \matr{x}_2 \in [0,1]^{10}\) as (probabilistic) outputs.
To map these vectors to 100 \(obs\) neural atoms, we compute their Cartesian product and obtain \(\matr{x} \in [0,1]^{100}\): \(\matr{x} = \matr{x}_1 \bigtimes \matr{x}_2 \).
Depending on the problem, the dimension of the continuous-valued interpretation vector may be different for implication and constraints.
In MNIST Addition, for implication rules it is not necessary to have \(label\) in the interpretation vector, as they are the heads of the rules.
On the other hand, it is necessary to include \(label\) in the interpretation vector for evaluating constraints, as they are present in the rule bodies.
It is imperative that the indexing remain consistent across all vectors and matrices; otherwise we risk neural network outputs being mapped to incorrect neural atoms.

The learning pipeline is shown in Figure \ref{fig:pipeline}.
Firstly, the input images are classified using the CNN, and we associate its probabilistic output with neural atoms \(obs/4\).
Then, using the neural atoms and label information, we obtain the logical loss which can be used to train the CNN with gradient backpropagation.

\begin{figure*}[ht]
    \centering
    \includegraphics[height=15em]{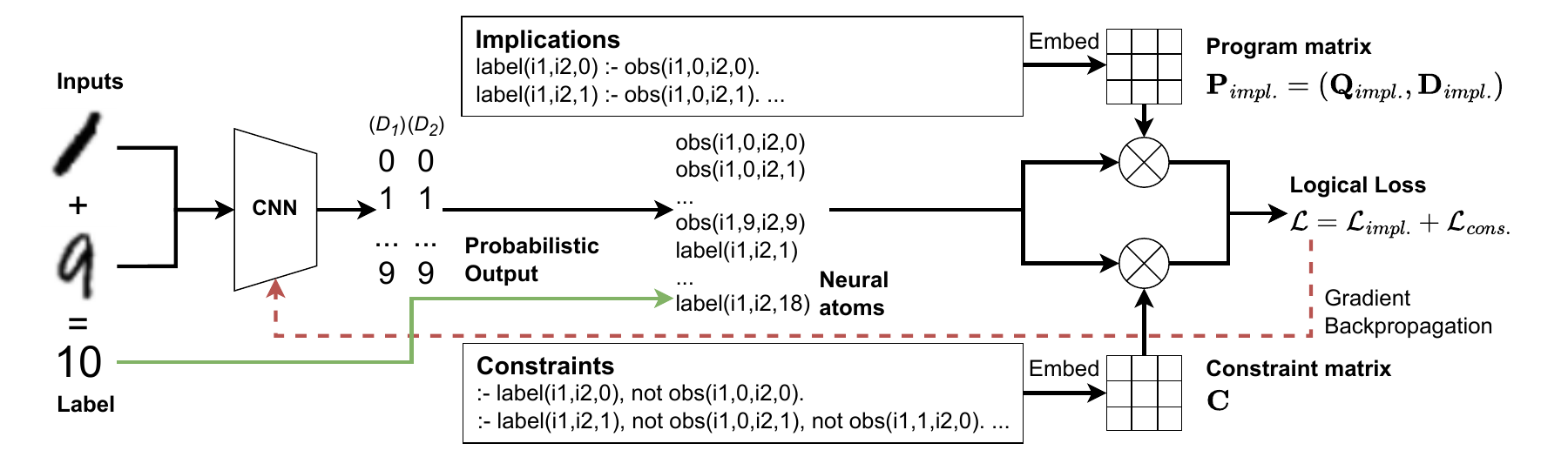}
    \caption{The learning pipeline for MNIST Addition.}
    \label{fig:pipeline}
\end{figure*}

\subsubsection{Loss Function}\label{sec:lossfunction}

Using the embedded program, continuous-valued interpretation vector and label information, the loss function is defined as follows:
\begin{align}
    \matr{h} &= \mathrm{min_1}\bigg(\mathbf{D}\Big(\mathbf{1}-\mathrm{min_1}\big(\mathbf{Q}(\mathbf{1}-\mathbf{w_z})\big)\Big)\bigg)\\
    \matr{c}' &= \matr{1} - \mathrm{min}_1(\matr{C}(\matr{1}-\matr{w_z})) \\
    \mathcal{L} & = \mathcal{L}_{impl.} + \mathcal{L}_{cons.} = BCE(\matr{h}, \matr{lb}) + BCE(\matr{c}', \matr{0})
\end{align}
where \(BCE\) stands for binary cross-entropy, \(\matr{lb}\) corresponds to the label vector, and \(\matr{0}\) is a zero vector with the same dimension as \(\matr{c}'\).
Note that \(\matr{w_z}\) here is the complementary interpretation vector based on the continuous-valued interpretation vector \(\matr{z}\) which contains neural atoms, facts and labels: \(\matr{w_z} = [\matr{z};\matr{1}_N-\matr{z}]\).
When the continuous valued interpretation vector is a binary one, \(\matr{h}\) corresponds to the interpretation \(I\).
If all atoms that are supposed to be true in \(I\) are true, then the BCE loss will be 0.
\(\matr{c}'\) corresponds to an indicator vector for the violated constraints. 
Thus, \(\matr{c}'\) should be all 0 when all constraints are satisfied.
Combining the aforementioned BCE's, the loss function will be 0 iff all implied neural atoms are true, and all constraints are satisfied.



The difference between programs (\ref{pgm:mnist_implication}) and (\ref{pgm:mnist_constraint}) lies in their structure: implication rules may contain label neural atoms in the head, whereas constraints always contain label neural atoms in the body.
Implication rules present a more straightforward approach for representing partial information in the form of logical rules, especially in scenarios where employing intermediate predicates is necessary or enumerating constraints is time-consuming.
In the context of the NeSy tasks examined in this study, we observed a consistent pattern where implication rules are typically comprised of straightforward forward inference rules, and constraints are formed from a fewer number of rules.
Each of these constraints contains a label neural atom alongside a series of negated observation atoms in their body.

Based on the definition of the combined loss function, it is clear that only one is necessary for accomplishing the MNIST Addition task.
The first BCE is essentially the same as the one for a 19-label multiclass classification task (labels spanning from 0 to 18), while the second BCE corresponds to a multiclass classification with an all-0 label.

The evaluation of implication rules ensures that if the correct label's corresponding atom is derived as the head, \(\mathcal{L}_{impl.}\) becomes 0. 
Similarly, if all constraints are satisfied, then \(\mathcal{L}_{cons.}\) becomes 0. 
Since both the evaluation of implication rules and constraints are defined in a (almost-everywhere) differentiable manner, it is possible to train the neural network using this loss function through backpropagation.

\section{Experiments}\label{sec:experiments}

\subsection{Task Description}

We studied the learning performance on the following NeSy tasks.

\subsubsection*{MNIST Addition \normalfont{\cite{manhaeveDeepProbLogNeuralProbabilistic2018}}}
The input consists of two MNIST images of digits (\(i_1,i_2\)), and the output is their sum (e.g., \inlineimg{images/8_31.png}, \inlineimg{images/1_3.png}, \(9\)). 
The goal is to learn image classification from the sum of digits, rather than from images with individually labeled digits, as in the usual MNIST classification. 
This paper deals with two types: single-digit additions (two images), and two-digit additions (four images).

\subsubsection*{ADD2x2 \normalfont{\cite{gauntDifferentiableProgramsNeural2017}}}
The input is four MNIST images of digits (\(i_{11},i_{12},i_{21},i_{22}\)) arranged in a 2x2 grid, and the output is four sums (\(s_1,s_2,s_3,s_4\)) calculated from each row and column of the grid (e.g., \inlineimg{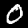}, \inlineimg{images/1_3.png}, \inlineimg{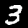}, \inlineimg{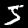}, \(1,8,3,6\)). 
The goal is to learn the classification problem of MNIST images from the four sums provided as labels.

\subsubsection*{APPLY2x2 \normalfont{\cite{gauntDifferentiableProgramsNeural2017}}}
The input consists of three numbers (\(d_1, d_2, d_3\)) and four handwritten operator images (\(o_1,o_2,o_3,o_4\)) arranged in a 2x2 grid, with the output being the results (\(r_1,r_2,r_3,r_4\)) of operations performed along each row and column of the grid. The operators are one of \(\{+, -, \times\}\). For example, the result for the first row is calculated as \(r_1 = (d_1\;op_{11}\;d_2)\;op_{12}\;d_3\) (e.g., \(1,2,4\), \inlineimg{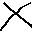}, \inlineimg{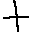}, \inlineimg{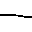}, \inlineimg{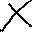}, \(6,-4,-2,12\)). The goal is to learn the classification of handwritten operators from the four results and three numbers given as labels.

\subsubsection*{MEMBER(n) \normalfont{\cite{tsamouraNeuralSymbolicIntegrationCompositional2021}}}
For n=3, the input consists of three images (\(i_1,i_2,i_3\)) and one number (\(d_1\)), with the output being a boolean indicating whether \(d_1\) is included in the three images (e.g., \inlineimg{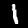}, \inlineimg{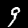}, \inlineimg{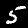}, \(4,0\)). For n=5, the input includes five images (\(i_1,...,i_5\)). The goal is to learn the classification problem of MNIST images from the numbers provided as labels. This paper deals with two types: n=3 and n=5.

\subsection{Implementation and Experimental Setup}

The methods introduced in the previous section were implemented in PyTorch. 
The convolutional neural network (CNN) used in the experiments is the same as those in \cite{manhaeveDeepProbLogNeuralProbabilistic2018} and \cite{yangNeurASPEmbracingNeural2020}. 
The experimental environment is an AMD Ryzen 7950X (16c/32t), 128GB RAM, and an NVIDIA A4000 (16GB), with settings to utilize the GPU as much as possible. 
The number of training data was 30,000 and 15,000 for Addition 1 and 2, respectively, and 10,000 for other tasks. 
Unless otherwise noted, the number of epochs and batch size for all tasks were set to 1, and Adam was used with a learning rate of 0.001.
Each experiment consisted of 5 repeated trials, and the average is reported.
The timeout was set to 30 minutes per trial.

\subsection{Results}

Table \ref{tab:matrix_size} shows the dimensions of the program matrix \(\matr{P}\), head matrix \(\matr{D}\), and constraint matrix \(\matr{C}\).
In terms of the number of rules in respective programs, the constraint matrix is usually smaller than the implication matrix, because each constraint rule (row) is often an enumeration of conditions for the target atom, whereas in the implication matrix, multiple rules (rows) can share the same head.
The rate of growth of the program matrix is highly task dependent; for example, adding two digits to the MNIST Addition task result in adding 10,000 elements to the matrix, whereas adding two digits to the set in the Member task results in adding a few thousand elements.
\begin{table}[htb]
    \centering
    \caption{Dimensions of program matrices}
    \begin{tabular}{l r r r}
    \hline
     & \(\matr{P}\) (implication) & \(\matr{D}\) (head) & \(\matr{C}\) (constraint) \\
    \hline
    Addition 1  & \(100\times200\)       & \(19\times100\)        & \(19\times238\)      \\
    Addition 2  & \(10000\times20000\)   & \(199\times10000\)    & \(199\times20398\)      \\
    Add2x2      & \(400\times800\)       & \(76\times400\)        & \(76\times952\)      \\
    Apply2x2    & \(11979\times2680\)    & \(10597\times11979\) & \(10597\times23874\)      \\
    Member 3    & \(40\times60\)         & \(20\times40\)         & \(40\times100\)      \\
    Member 5    & \(60\times100\)        & \(20\times60\)         & \(60\times140\)      \\
    \hline
    \end{tabular}
    \label{tab:matrix_size}
\end{table}

The accuracy and training time are reported in tables \ref{tab:accuracy} and \ref{tab:time}, respectively.
In the tables, I indicates training using only implication rules, C indicates training using only constraints, and I+C indicates training using both implication rules and constraints.
DPL, DSL and NASP are abbreviations for DeepProbLog \cite{manhaeveDeepProbLogNeuralProbabilistic2018}, DeepStochLog \cite{wintersDeepStochLogNeuralStochastic2022a} and NeurASP \cite{yangNeurASPEmbracingNeural2020}, respectively.

Table \ref{tab:accuracy} shows the average accuracy of MNIST digit classification and math operator classification.
It can be seen that while the proposed method achieved comparable accuracy to the comparison methods in MNIST Addition and Add2x2, it significantly outperformed the comparison methods in Apply2x2. 
However, in Member 3, the accuracy was lower. 
Comparing the use of implication rules, constraints, or both, there was no significant difference except for Member(n), where using either implication rules or constraints alone tended to result in higher accuracy than using both.
In particular, for Member 5 tasks, combining implication rules and constraints led to significantly worse performance compared to individual applications. 
This suggests that we might need to modify the combination loss or consider introducing a scheduler for improving training performance.

\begin{table}[htb]
    \centering
    \caption{Accuracy on digit and operator classification. The numbers in parentheses indicate timeouts (30 min).}
    \begin{tabular}{l r r r| rrr}
    \hline
    \multicolumn{1}{l}{Accuracy (\%)} & \multicolumn{3}{c}{Comparisons} & \multicolumn{3}{c}{Ours} \\
                & DPL   & DSL & NASP  & I & C & I+C   \\
    \hline
    Addition 1  & \textbf{97.8} & 95.8          & 97.7          & 97.7          & 97.5          & 97.4      \\
    Addition 2  & 97.7          & 97.8          & 97.8          & 97.5          & \textbf{97.9} & 97.8      \\
    Add2x2      & T/O(5)        & \textbf{98.0} & 97.5          & 97.7          & 97.6          & 97.9      \\
    Apply2x2    & 87.8          & 87.8          & 80.9          & \textbf{99.5} & 99.4          & 99.4      \\
    Member 3    & 92.3(3)       & \textbf{92.9} & 91.7          & 87.8          & 87.0          & 84.6      \\
    Member 5    & T/O(5)        & T/O(5)        & T/O(5)        & 86.3          & \textbf{86.4} & 69.5      \\
    \hline
    \end{tabular}
    \label{tab:accuracy}
\end{table}

Table \ref{tab:time} shows the average training times of each method.
Except for Apply2x2, it is evident that the proposed method can learn faster than existing methods.
Especially in Member 5, where comparison methods timed out, the proposed method processed 10,000 training instances in about 13 seconds, showing a significant speed difference.
The long training time for Apply2x2 can be attributed to the rather naive implementation used in this experiment.
In our implementation, Apply2x2 required a more fine-grained control over indexing, necessitating the use of less efficient 'for-loops' to avoid neural network outputs being mapped to incorrect neural atoms.
In contrast, in other tasks, the more efficient vectorized operations were used to compute the Cartesian product.

\begin{table}[htb]
    \centering
    \caption{Learning times of each method. The numbers in parentheses indicate timeouts (30 min).}
    \begin{tabular}{l r r r| rrr}
    \hline
    \multicolumn{1}{l}{Time (sec)} & \multicolumn{3}{c}{Comparisons} & \multicolumn{3}{c}{Ours} \\
                & DPL     & DSL & NASP   & I & C & I+C   \\
    \hline
    Addition 1  & 470.7     & \textbf{20.8}      & 84.7      & 31.4          & 31.4          & 35.9      \\
    Addition 2  & 1120      & 33.6      & 283.5     & 83.5          & \textbf{19.9}          & 87.4      \\
    Add2x2      & T/O(5)    & 35.8      & 131.1     & 16.9          & \textbf{16.2}          & 18.0      \\
    Apply2x2    & 323.3     & 127.7     & \textbf{25.5}      & 154.4         & 228.5         & 359.5      \\
    Member 3    & 1782(3)   & 398.4     & 191.9     & \textbf{10.3}          & \textbf{10.3}          & 11.9      \\
    Member 5    & T/O(5)    & T/O(5)    & T/O(5)    & \textbf{11.6}          & 11.7          & 13.4      \\
    \hline
    \end{tabular}
    \label{tab:time}
\end{table}

\begin{figure}[htb]
    \centering
    \includegraphics[width=\linewidth]{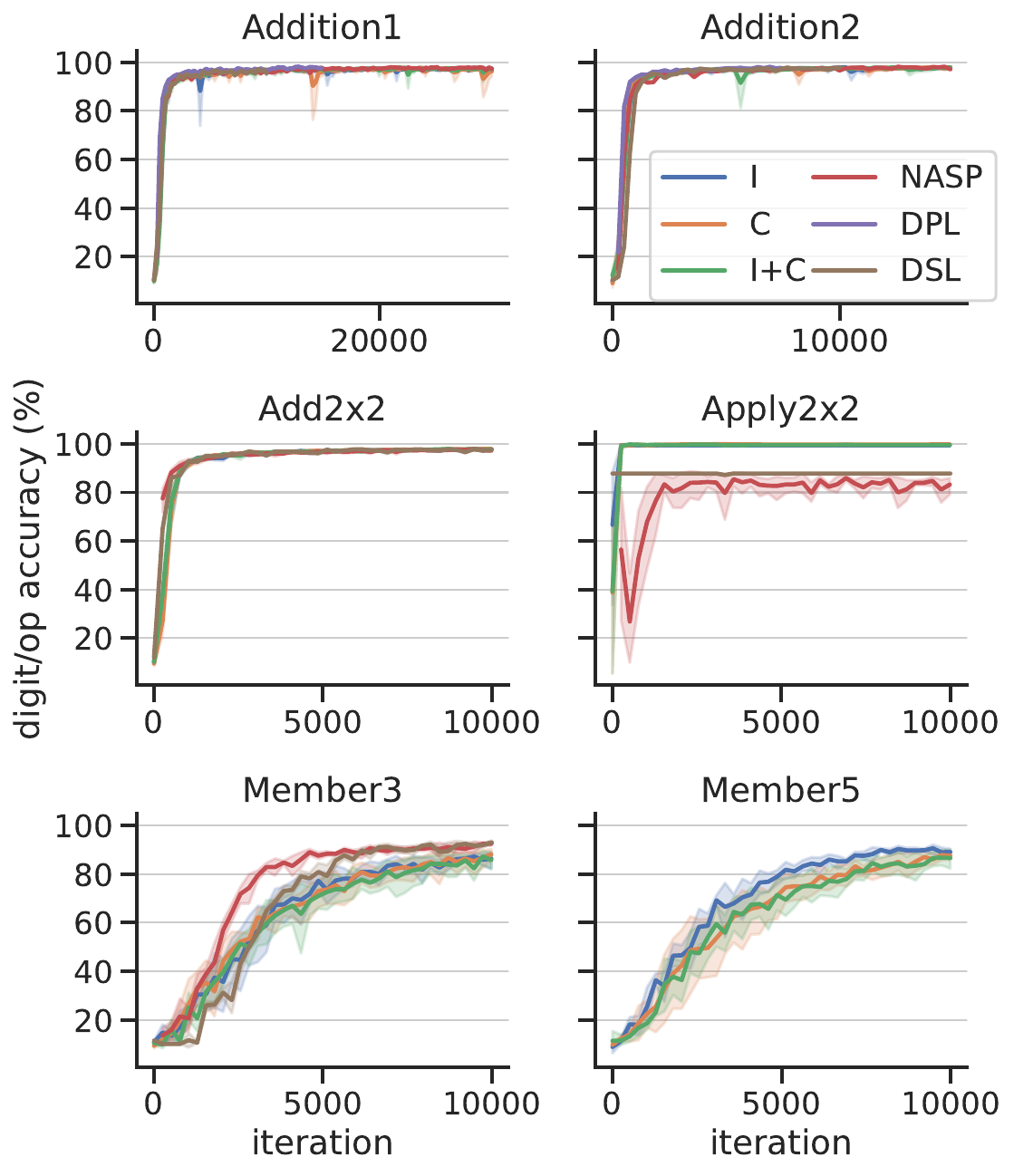}
    \caption{Test accuracy (\%) during training.}
    \label{fig:graph_training}
\end{figure}

Figure \ref{fig:graph_training} shows the test accuracy during training.\footnote{This is from a different set of experiments, as such, the results presented in this figure do not necessarily match those in Table \ref{tab:accuracy} and Table \ref{tab:time}. The timeout was set to 2 hours per trial.}
We see that the addition tasks (MNIST Addition and Add2x2) can be handled equally well by our method, NeurASP and DeepStochLog.
The difference is more pronounced for Apply2x2 and Member3, and in Apply2x2, we observe that DeepStochLog plateaus quickly while NeurASP fluctuates around 80\%.
In Member 3, NeurASP leads in terms of accuracy up to 5,000 iterations, other methods catch up after 9,000 iterations.
Finally, it is interesting that the implication rule only method performs well in Member 5 compared against constraint only or the combination of both.
This suggests that there might be tasks that can be learned better by certain types or combinations of programs, although it is difficult to know beforehand which one would perform the best.


\section{Related Work}\label{sec:relatedwork}

\citet{DBLP:conf/icml/XuZFLB18} introduced the semantic loss that leverages sentential decision diagrams for efficient loss calculation, enabling effective learning in both supervised and semi-supervised settings.
LTN \cite{badreddineLogicTensorNetworks2022} and LNN \cite{riegelLogicalNeuralNetworks2020a} embed first-order logic formulas using fuzzy logic within neural network architectures.
Our approach is similar to the semantic loss, in the sense that we evaluate the neural output using a differentiable loss function to train the neural network by backpropagation.
On the other hand, our approach does not require weighted model counting, nor direct embedding of logical operations, and the logic programming semantics is kept separate from the perception neural network.

Parallel to direct integration strategies, significant work has been conduced on coupling neural outputs with symbolic solvers.
For instance, DeepProbLog \cite{manhaeveDeepProbLogNeuralProbabilistic2018}, DeepStochLog \cite{wintersDeepStochLogNeuralStochastic2022a}, NeurASP \cite{yangNeurASPEmbracingNeural2020}, and NeuroLog \cite{tsamouraNeuralSymbolicIntegrationCompositional2021} facilitate inference by integrating the neural outputs as probabilities or weighted models within a symbolic solver.
While these approaches can utilize logical constraints and background knowledge represented by logical programs, the computational cost of symbolic reasoning can become a bottleneck during learning.
Distinct from the aforementioned coupling approaches, the focus shifts more towards reasoning rather than simultaneous learning in neural-symbolic systems proposed by \citet{eiterNeuroSymbolicASPPipeline2022a} and Embed2Sym \cite{aspisEmbed2SymScalableNeuroSymbolic2022}.
In the pipeline proposed by \citet{eiterNeuroSymbolicASPPipeline2022a}, the neural network is trained separately, and predictions that pass a predefined confidence threshold are then translated into logic programs for reasoning with ASP.

Various neural-symbolic approaches have been developed for symbolic rule learning, with varying degree of integration between logical reasoning and neural computation. Neural Theorem Provers \cite{rocktaschelEndtoendDifferentiableProving2017} and \(\partial\)ILP \cite{evansLearningExplanatoryRules2018} allow end-to-end differentiable learning of explanatory rules from data.
Similarly, frameworks like NeuroLog \cite{tsamouraNeuralSymbolicIntegrationCompositional2021}, NSIL \cite{cunningtonNeuroSymbolicLearningAnswer2023}, and the Apperception Engine \cite{evansMakingSenseRaw2021} integrate symbolic solvers to enhance the reasoning processes, utilizing the output from neural networks represented as logical constructs.
Additionally, \(\alpha\)ILP \cite{shindoAlphaILPThinking2023} and DFORL \cite{gaoDifferentiableFirstorderRule2024} extend the capabilities of inductive logic programming (ILP) by making the learning process differentiable.
In particular, \(\alpha\)ILP is designed for visual scene understanding, while DFORL focuses on relational data, demonstrating the versatility of differentiable ILP in handling diverse data types.
In Neural Logic Machines \cite{dongNeuralLogicMachines2019}, the architecture of the neural network itself is designed to mimic logical reasoning processes, thereby learning to approximate symbolic rules.
In contrast to these methods, which predominantly focus on enhancing the logic programming aspects within neural frameworks, our approach focuses more on enhancing the training of neural networks themselves, and we leave the inductive learning of logic programs for future work.

Recent advancements have explored various methods for representing logic programming semantics in vector spaces, each with distinct embedding strategies and computational techniques. 
Notably, \citet{sakamaLogicProgrammingTensor2021} proposed a method to embed logic programs into vector spaces, and compute logic programming semantics using linear algebra.
More specifically, their method allows for the computation of stable models as fixpoints through repeated tensor multiplications. 
However, a significant limitation from the standpoint of neural-symbolic integration is the non-differentiability of their method, which complicates direct integration with neural networks.
Other matrix-based approaches by \citet{sato3valuedSemanticsSupported2020} and \citet{takemuraGradientBasedSupportedModel2022} use binary program matrices to represent logic programs.
While \citet{sato3valuedSemanticsSupported2020}'s method is non-differentiable, the one proposed by \citet{takemuraGradientBasedSupportedModel2022} is based on a differentiable loss function, where the interpretation vector is treated as the input to this loss function, and the loss is minimized by updating the interpretation vector using the gradient of the loss function.
Similarly, \citet{aspisStableSupportedSemantics2020} proposed a method based on a root-finding algorithm, which presents yet another computational strategy in this domain.
In contrast to these existing methods, our approach computes supported models in vector spaces without imposing restrictive conditions on program structures that require the rewriting of same head rules, for example.
The method presented in this paper does not minimize the loss function by updating the interpretation vector using the gradient information, and the loss is reduced by updating the parameters of the neural network to predict the correct intermediate labels in a distant supervision setting.
To the best of our knowledge, no other implementation exists that utilizes the differentiable logic programming semantics for training neural networks in neural-symbolic settings.

\section{Conclusion}\label{sec:conclusion}

We proposed a method to assist the learning of neural networks with logic programs and verified its effectiveness in NeSy tasks. 
This method is based on a differentiable logic programming semantics, where continuous-valued interpretation vectors contain outputs of neural networks, and evaluation of implication rules and constraints are incorporated into a loss function, enabling the learning of neural networks under the distant supervision settings.
The experimental results showed that it is possible to achieve comparable accuracy to those based on symbolic solvers, and with an exception, the proposed method completed the neural network training much faster than the existing methods.

The findings of this study demonstrate the effectiveness of the approach based on the differentiable logic programming semantics for enabling high-accuracy and fast learning in NeSy. 
Future work includes applying the proposed method to more complex NeSy tasks.
Additionally, a more detailed analysis of the balance between constraints and implication rules is necessary. 
In the tasks addressed in this study, there was little difference between using constraints and implication rules, but this might vary by task.
Another limitation is that the semantics as presented currently is only valid for crisp 0-1 interpretations, so other continuous-valued interpretations do not necessarily have natural meanings associated with them.
To this end, it might be interesting to seek connections between the differentiable logic programming semantics and fuzzy logic.




\begin{ack}
This work has been supported by JST CREST JPMJCR22D3 and JSPS KAKENHI JP21H04905.
\end{ack}



\bibliography{ref}


\newpage
\appendix

\section{Experimental Details}

\subsection{Datasets}

For tasks involving MNIST digits, specifically MNIST Addition 1 and 2, Member 3 and 5, and Add2x2, we used the MNIST dataset from \citet{lecunGradientbasedLearningApplied1998a}.
The math operator images in Apply2x2 task comes from the HASY dataset by \citet{thomaHASYv2Dataset2017b}, and we only used the addition, subtraction and multiplication operators.
We then generated random combinations of images for each task and used them as training data.

\subsection{Computational Environment}

The experimental environment is a desktop computer with an AMD Ryzen 7950X (16c/32t), 128GB RAM, and an NVIDIA A4000 (16GB), with settings to utilize the GPU as much as possible.

\subsection{Neural Network}

\subsubsection{Neural Network Architectures}

\begin{table}[ht]
    \centering
    \caption{Neural network architectures used in the experiments}
    \begin{tabular}{p{0.15\linewidth}p{0.34\linewidth}p{0.34\linewidth}}
        \hline
        Task       & Network & Neural Baseline \\
        \hline
        Addition 1 & Conv2D(6,5), MaxPool2D(2,2), ReLU, Conv2D(16,5), MaxPool2D(2,2), ReLU, Linear(120), ReLU, Linear(84), ReLu, Linear(10), Softmax & Identical to the network on the left, except the last two layers Linear(19), LogSoftmax \\
        Addition 2 & Identical to Addition 1 & Identical to Addition 1, except the last two layers Linear(199), LogSoftmax \\
        Add2x2     & Identical to Addition 1 & Identical to Addition 1, except the last two layers Linear(19), LogSoftmax \\
        Apply2x2   & Identical to Addition 1, except the penultimate layer Linear(3) & Identical to Addition 1, except the last two layers Linear(1200), LogSoftmax \\
        Member 3   & Identical to Addition 1 & Identical to Addition 1, except the last two layers Linear(1), Sigmoid \\
        Member 5   & Identical to Addition 1 & Identical to Addition 1, except the last two layers Linear(1), Sigmoid \\
        \hline
    \end{tabular}
    \label{tab:neuralarchitectures}
\end{table}

Table \ref{tab:neuralarchitectures} summarizes the neural network architectures used in the experiments.
Conv2D(o,k) is a 2-dimensional convolution layer with o output channels and a kernel size of k.
MaxPool2D(k,s) is a 2-dimentional MaxPool layer with a kernel size of k, and a stride of s.
Lin(n) is a fully connected layer of output size n.

\subsubsection{Neural Baselines}

We report the results of the neural baselines in Table \ref{tab:neuralbaseline}.
The experimental settings are identical to the ones presented in the main text, i.e., the same number of training data, optimizer settings (Adam with learning rate of 0.001), number of epochs (1), batch size (1), time out (30 mins), and number trials (5 per experiment).
The architectures used in this experiment are shown in the ``Neural Baseline`` column of Table \ref{tab:neuralarchitectures}.

Note that the accuracy figures in this table are \textit{not} directly comparable to the results in the main text.
The results reported in the main text are the accuracy figures for the digit and operator classifications, whereas the ones reported here are the accuracy for the labels themselves, i.e., the sums in addition tasks or the binary membership labels in membership tasks.

\begin{table}[ht]
    \centering
    \caption{Neural baseline results}
    \begin{tabular}{l c c}
        \hline
        Task       &  Accuracy (\%)  & Time (s) \\
        \hline
        Addition 1 &  89.3 & 22.3 \\
        Addition 2 &   1.0 & 12.6 \\
        Add2x2     &  66.7 & 19.2 \\
        Apply2x2   &   0.1 & 24.3 \\
        Member 3   &  26.9 &  8.7 \\
        Member 5   &  40.6 &  9.5 \\
        \hline
    \end{tabular}
    \label{tab:neuralbaseline}
\end{table}

\section{Proof of Proposition 1}

\begin{proposition}(Embedding Models of Normal Logic Programs) \label{prop:appdx_embed_model}
    Let \(\matr{P} = (\matr{Q}, \matr{D})\) be an embedding of a ground normal logic program \(P\), \(dist(\matr{\cdot},\matr{\cdot})\) be a distance function in a metric space, \(\matr{v}\) be an interpretation vector representing \(I \subseteq B_P\), and \(\matr{w}\) be its complementary interpretation vector.
    Then, for a suitable interpretation vector \(\matr{v}\),
    \begin{equation*}
        I \models comp(P) \ \mathrm{iff}\ dist\bigg(\matr{v}, \mathrm{min_1}\bigg(\mathbf{D}\Big(\mathbf{1}-\mathrm{min_1}\big(\mathbf{Q}(\mathbf{1}-\mathbf{w})\big)\Big)\bigg)\bigg) = 0
    \end{equation*}
\end{proposition}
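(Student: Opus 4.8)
The plan is to verify the equivalence componentwise, tracking the semantics of Clark's completion through each layer of the matrix expression. First I would fix an interpretation $I \subseteq B_P$, its interpretation vector $\matr{v}$, and its complementary interpretation vector $\matr{w} = [\matr{v}; \matr{1}_N - \matr{v}]$, and observe that for a row slice $\matr{Q}_{i:}$ representing the body of rule $R_i$, the dot product $\matr{Q}_{i:}(\matr{1}-\matr{w})$ counts exactly the number of literals in $body(R_i)$ that are \emph{false} in $I$: a positive body atom $a_j \notin I$ contributes $1$ via the first block (since $\matr{w}_j = 0$), and a negative body literal $\neg a_j$ with $a_j \in I$ contributes $1$ via the second block. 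Hence $\mathrm{min_1}(\matr{Q}_{i:}(\matr{1}-\matr{w})) = 0$ iff every literal of $body(R_i)$ holds in $I$, i.e.\ iff $body(R_i) \subseteq I$ in the sense of the $\models$ relation; equivalently $1 - \mathrm{min_1}(\matr{Q}_{i:}(\matr{1}-\matr{w})) = 1$ iff $I$ satisfies $body(R_i)$, and $0$ otherwise.

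Next I would push this through the head matrix. Writing $\matr{b} = \matr{1} - \mathrm{min_1}(\matr{Q}(\matr{1}-\matr{w})) \in \{0,1\}^R$ for the vector of body truth values, the construction of $\matr{D}$ gives $(\matr{D}\matr{b})_i = \sum_{R_j \in heads(P,a_i)} \matr{b}_j$, the number of true bodies among the rules with head $a_i$. Therefore $\matr{h}^I_i := \mathrm{min_1}((\matr{D}\matr{b})_i) = 1$ iff at least one rule body with head $a_i$ is true in $I$, and $\matr{h}^I_i = 0$ iff no such body is true (in particular $\matr{h}^I_i = 0$ when $a_i$ is the head of no rule, so the empty disjunction is correctly false). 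By definition of $comp(R_{a_i})$ as $a_i \leftrightarrow \bigvee_{R_j \in heads(P,a_i)} body(R_j)$ and the given semantics of $\models$ on disjunctions, this says precisely: $\matr{h}^I_i = 1$ iff $I \models \bigvee_{R_j \in heads(P,a_i)} body(R_j)$.

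With these two observations the equivalence is immediate in both directions. If $I \models comp(P)$, then for each $i$ we have $a_i \in I \iff I \models \bigvee body(R_j)$, which by the above reads $\matr{v}_i = 1 \iff \matr{h}^I_i = 1$, i.e.\ $\matr{v}_i = \matr{h}^I_i$ for all $i$; since $dist$ is a metric, $dist(\matr{v}, \matr{h}^I) = 0$. Conversely, if $dist(\matr{v}, \matr{h}^I) = 0$ then $\matr{v} = \matr{h}^I$ (by the identity-of-indiscernibles axiom of a metric), so $a_i \in I \iff \matr{h}^I_i = 1 \iff I \models \bigvee body(R_j)$ for every $i$, and since this is exactly the condition $p \in I \iff I \models comp(R_p)$ characterising models of the completion, we conclude $I \models comp(P)$. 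The only subtlety worth stating carefully — and the step I expect to require the most care rather than the most difficulty — is the edge case where an atom heads no rules: one must check that $\matr{D}$ has a zero row there so that $\matr{h}^I_i = 0$, matching the convention that the empty disjunction is false and hence that such an atom cannot be in any model of $comp(P)$. Everything else is bookkeeping that the componentwise argument makes routine; the genuinely load-bearing facts are De Morgan's law (justifying the $1 - \mathrm{min_1}(\cdot)$ encoding of conjunction) and the exactness of the counting arguments, both of which hinge on $\matr{v}$, $\matr{w}$ being genuinely $0$--$1$ vectors so that $\mathrm{min_1}$ acts as a Boolean clamp.
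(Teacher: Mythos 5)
Your proposal is correct and follows essentially the same route as the paper's own proof: De Morgan's law to evaluate rule bodies via $\mathrm{min_1}(\matr{Q}_{i:}(\matr{1}-\matr{w}))$, the head matrix to count true bodies per head atom, and then a componentwise comparison of $\matr{v}$ with $\matr{h}^I$ in both directions. If anything, your treatment is slightly more careful than the paper's, since you explicitly handle the empty-disjunction case (an atom heading no rules gives a zero row of $\matr{D}$, hence $\matr{h}^I_i=0$) and invoke the identity-of-indiscernibles property of the metric to pass from $dist(\matr{v},\matr{h}^I)=0$ to $\matr{v}=\matr{h}^I$.
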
  
\begin{proof}
    Firstly, we show that \(\matr{1}-\mathrm{min_1}(\mathbf{Q}(\mathbf{1}-\mathbf{w}))\) corresponds to the evaluation of conjunctions in the rule bodies.
    We only show the case for a single row slice of the program matrix, but by construction it can be trivially applied to the entire matrix.
    By construction, a slice of the program matrix \(\matr{Q}_{i:}\) corresponds to the body of a rule \(R_i\).
    Then, the matrix-vector product \(\matr{Q}_{i:}\matr{w}\) computes the number of true literals in \(I\).
    When all literals in the body of \(R_i\) are true, the body is true in \(I\), and \(\sum_{j}|\matr{Q}_{ij}|=\matr{Q}_{i:}\matr{w}\).
    The conjunctions can also be computed as the negation of disjunctions of negated literals, using De Morgan's law, i.e., \(L_1 \wedge \dots \wedge L_n = \neg (\neg L_1 \vee \dots \vee \neg L_m)\).
    Thus, we count the number of false literals in the body of \(R_i\), by \(k=\matr{Q}_{i:}(\matr{1}-\matr{w})\).
    If \(k \geq 1\), then there is at least one literal that is false in \(I\), and if \(k = 0\) then there is no literal that is false in \(I\).
    The existence of at least one false literal is computed by \(\mathrm{min_1}(\mathbf{Q}_{i:}(\mathbf{1}-\mathbf{w}))\), which is 1 if there is a false literal and 0 otherwise.
    This corresponds to the evaluation of disjunctions of negated literals.
    Now we obtain the complement by subtracting from a 1-vector, i.e., \(\matr{1}-\mathrm{min_1}(\mathbf{Q}_{i:}(\mathbf{1}-\mathbf{w}))\), which corresponds to the negation of the disjunctions.
    Therefore, \(\matr{1}-\mathrm{min_1}(\mathbf{Q}_{i:}(\mathbf{1}-\mathbf{w}))\) corresponds to the evaluation of \(I \models R_i\) in vector spaces.

    Secondly, we show that \(\mathrm{min_1}\bigg(\mathbf{D}\Big(\mathbf{1}-\mathrm{min_1}\big(\mathbf{Q}(\mathbf{1}-\mathbf{w})\big)\Big)\bigg)\) corresponds to the evaluation of disjunctions of the rule bodies that share the same head atoms.
    Let \(\{R^{a_i}\} = \{a_i \leftarrow B_j,\dots,a_i \leftarrow B_t\}\) be the ordered set of rules that share the same head atom \(a_i\), where \(B_j\) denote the rule bodies, and \(B_j \vee \dots \vee B_t\) be the disjunction of the rule bodies.
    We assume the starting index \(j\) is appropriately set such that \(j\) corresponds to the first row of submatrix corresponding to \(\{R^{a_i}\}\) in the program matrix \(\matr{Q}\).
    Using the aforementioned result, we can count the number of true rule bodies as: \(d_i = \sum_{j}^{t}(\matr{1}-\mathrm{min_1}(\mathbf{Q}_{j:}(\mathbf{1}-\mathbf{w})))\).
    By construction of the head matrix \(\matr{D}\), we can replace the summation by matrix multiplication: \(d_i = \matr{D}_{i:}(\matr{1}-\mathrm{min_1}(\mathbf{Q}(\mathbf{1}-\mathbf{w})))\).
    For an atom \(a_i\), \(d_i \geq 1\) if there is at least one rule body that is true in \(I\), and \(d_i = 0\) otherwise.
    Thus, \(\matr{h}_i = \mathrm{min_1}\bigg(\mathbf{D}_{i:}\Big(\mathbf{1}-\mathrm{min_1}\big(\mathbf{Q}(\mathbf{1}-\mathbf{w})\big)\Big)\bigg) = 1\) if there is at least one rule body that is true in \(I\) and in the disjunction \(B_j \vee \dots \vee B_t\), and 0 otherwise.
    Computing \(\matr{h}_i\) also corresponds to the evaluation of \(I \models comp(R_{a_i})\).
    Let us introduce a column vector \(\matr{h}^I = \mathrm{min_1}\bigg(\mathbf{D}\Big(\mathbf{1}-\mathrm{min_1}\big(\mathbf{Q}(\mathbf{1}-\mathbf{w})\big)\Big)\bigg)\).
    Then the elements \(\matr{h}^I_i\) are 1 if there is at least one rule body for \(a_i\) that is true in \(I\), and 0 otherwise.
    Therefore, \(\mathrm{min_1}\bigg(\mathbf{D}\Big(\mathbf{1}-\mathrm{min_1}\big(\mathbf{Q}(\mathbf{1}-\mathbf{w})\big)\Big)\bigg)\) corresponds to the evaluation of disjunctions of the rule bodies that share the same head atoms.

    Now we are ready to prove the if-and-only-if relation in the proposition.
    Let \(\matr{h}^I = \mathrm{min_1}\bigg(\mathbf{D}\Big(\mathbf{1}-\mathrm{min_1}\big(\mathbf{Q}(\mathbf{1}-\mathbf{w})\big)\Big)\bigg)\), then the second part is simplified to \(dist(\matr{v}, \matr{h}^I)) = 0\).
    \begin{itemize}
        \item If \(I \models comp(P)\), then \(dist(\matr{v}, \matr{h}^I) = 0\). \\
        Suppose \(I \models comp(R_{a_i})\), then there is at least one rule body that is true in \(I\), so \(\matr{h}_i^{I} = 1\).
        Otherwise, when we have \(I \not\models comp(R_{a_i})\), \(\matr{h}_i^{I} = 0\).
        Therefore, it holds that \(\matr{h}_i^{I} = \matr{v}_i\), and since this can be trivially applied to the entire vector using the index \(i\), we have \(\matr{v} = \matr{h}_i^{I}\), i.e., \(dist(\matr{v}, \matr{h}^I) = 0\).
        
        \item If \(dist(\matr{v}, \matr{h}^I) = 0\), then \(I \models comp(P)\). \\
        Consider \(dist(\matr{v}, \matr{h}^I) = 0\). 
        For \(\matr{h}^I_i = 1\), there is at least one rule body that is true in \(I\), and for \(\matr{h}^I_i = 0\), there is no rule body that is true in \(I\). 
        Since we have \(\matr{v}_i = \matr{h}^I_i\), for \(\matr{v}_i = \matr{h}^I_i = 1\), \(a_i \leftrightarrow \bigvee_{R_j \in heads(P, a_i)} body(R_j)\) is satisfied and denote \(I \models comp(R_{a_i})\), and for \(\matr{v}_i = \matr{h}^I_i = 0\) denote \(I \not\models comp(R_{a_i})\).
        Since the index \(i\) is arbitrary, we conclude \(I \models comp(P)\).
    \end{itemize}
\end{proof}

\section{Loop Formula and Stable Model Semantics}

The content of the following section is closely related to a study by Sato et al. \cite{satoEndtoendASPComputation2023}, which addresses the computation of stable models in vector spaces.

\subsection{Embedding Loop Formula}

The Lin-Zhao theorem \cite{linASSATComputingAnswer2004} shows how to turn a normal logic program into a set of propositional formulas that describe the stable models of the program.
A set \(\emptyset \subset L \subseteq B_P\) is a \textit{loop} of a logic program if it induces a non-trivial strongly connected subgraph of the positive dependency graph of \(P\).
A subgraph is said to be non-trivial if each pair of atoms in \(L\) is connected by at least one path of non-zero length.
The set of all loops of \(P\) is denoted by \(loop(P)\).
The number of loops may be exponential in \(|B_P|\), and the program \(P\) is \textit{tight} if \(loop(P)=\emptyset\).
For \(L \subseteq B_P\), the \textit{external support} is given by \(ES(L)=\{ r \in P | head(r), body^+(r) \cap L = \emptyset \}\).
The \textit{Loop formula} can be constructed to exclude loops without external support.
The \textit{disjunctive loop formula} is \(LF(L)^{\vee} = (\bigvee_{a\in L} a) \rightarrow (\bigvee_{B\in EB_P(L)}BF(B))\).
According to \citet{ferrarisGeneralizationLinZhaoTheorem2006}, it is equivalent to the \textit{conjunctive loop formula} \(LF(L)^{\wedge} = (\bigwedge_{a\in L} a) \rightarrow (\bigvee_{B\in EB_P(L)}BF(B))\).
The loop formula enforces all atoms in \(L\) to be false whenever \(L\) is not externally supported.

We introduce two types of matrices; a \textit{loop matrix} which corresponds to the enumeration of loops \(L\), and a \textit{external support matrix} which is an indicator matrix for the external support.
\begin{definition} (Loop matrix)
    Let the \(o\)-th loop be denoted by \(L_o \in L \, (1 \leq o \leq O)\) where \(O\) is the total number of loops in the program.
    Then, \(L\) is embedded into the matrix \(\matr{L}\) by stacking a binary row vector \(\matr{L}_{o,:}\), where \(\matr{L}_{o,i}=1\) if \(\mathrm{idx}(i) \in L_o\) and \(\matr{L}_{o,i}=0\) otherwise.
\end{definition}

\begin{definition} (External Support matrix)
    Let \(ES(L_o)\) be the external support for the loop \(L_o\).
    Then, define a binary matrix \(\matr{ES}^{L_o} \in \{0,1\}^{N \times R}\) where \(\matr{ES}^{L_o}_{i,j}=1\) if the \(j\)-th rule is the external support for the atom \(a_i \in L\), and \(\matr{ES}^{L_o}_{i,j}=0\) otherwise.
\end{definition}
Since \(ES^{L_o}\) is defined for each loop, there may be an exponential number of external support matrices.

\begin{example}
    Consider \(P_1\), which has a single loop \(L = \{\{a\}\}\). Then according to the previous definitions construct \(\matr{L}^{P_1} \in \{0,1\}^{1\times3}\), and \(ES^{L_1} \in \{0,1\}^{3\times3}\) as follows:
    \begin{equation}
        \let\quad\enspace
        \matr{L}^{P_1}  = \bordermatrix{ & a & b & c \cr
                                     L_1 & 1 & 0 & 0    
        } \;
        \matr{ES}^{L_1}  = \bordermatrix { & R_1 & R_2 & R_3 \cr
                                    a & 1   & 0   & 0   \cr
                                    b & 0   & 0   & 0   \cr
                                    c & 0   & 0   & 0   
        }
    \end{equation}
\end{example}

\subsection{Evaluating Embedded Loop Formula}
We shall be using the conjunctive loop formula definition.
Recall the definition of the conjunctive loop formula: \(LF(L)^{\wedge} = (\bigwedge_{a\in L} a) \rightarrow (\bigvee_{B\in EB_P(L)}BF(B))\).
This can then be translated into the following; \(\neg (\bigwedge_{a\in L} a) \vee (\bigvee_{B\in EB_P(L)}BF(B))\).
The first term can be trivially evaluated by a simple matrix-vector multiplication; given an interpretation vector \(\matr{v}\) representing an interpretation \(I\), \(\matr{L}_{o}(1-\matr{v})\) computes the number of loop atoms that are \textit{false} in \(I\).

The second term requires that there must be at least one external support rule body that is satisfied in \(I\).
We start by evaluating the rule bodies in a similar manner to the evaluation of implication rules as in Proposition \ref{prop:appdx_embed_model}.
Given an embedding \(\matr{Q}\) of a program and a complementary interpretation vector \(\matr{w}\), we obtain a indicator vector \(\matr{z} \in \{0,1\}^{R \times 1}\) where \(\matr{z}_i=1\) indicates the rule body is satisfied: \(\matr{z} = \mathbf{1}-\mathrm{min_1}(\mathbf{Q}(\mathbf{1}-\mathbf{w}))\). 
Then, we combine \(\matr{z}\) with the loop matrix and the external support matrix: \(\matr{L}_o \matr{ES}^{L_o} \matr{z}\).
Intuitively, \(\matr{ES}^{L_o} \matr{z}\) is equivalent to checking whether at least one of the external support bodies holds for each \(a_i \in L_o\), and multiplication with \(\matr{L}_o\) amounts to counting the number of true external support bodies.

For a loop \(L_o \in L\), we then combine the two terms in the disjunction:
\begin{equation}
    u_o = \mathrm{min}_1 \big( \matr{L}_o(1-\matr{v}) + \matr{L}_o \matr{ES}^{L_o} \matr{z} \big)
\end{equation}
Note that \(u_o\) is a scalar and \(u_o=1\) indicates the loop formula for \(L_o\) is satisfied.
We then apply this to all loops in the program, and count the number of unsatisfied loop formula:
\begin{equation}
    \mathcal{L}_{loop} = \sum_{o=1}^{O} (1 - u_o)
\end{equation}
This term will be 0 iff the loop formula is satisfied.

\subsection{Stable Model Semantics in Vector Spaces}

The Lin-Zhao theorem essentially states that \(I\) is a stable model of a program iff \(I \models comp(P) \cup LF\) where \(comp(P)\) is the completion and \(LF\) is the loop formula.
Thus, combining Proposition \ref{prop:embed_model} and the aforementioned embedded loop formula evaluation, we can construct the following loss function which is 0 iff when \(\matr{v}\) (and corresponding \(I\)) is a stable model of \(P\).\\\
\begin{align}
    \mathcal{L}_{SM} &= \mathcal{L}_{comp.} + \mathcal{L}_{loop} \nonumber \\ 
    &= dist\bigg(\matr{v}, \mathrm{min_1}\bigg(\mathbf{D}\Big(\mathbf{1}-\mathrm{min_1}\big(\mathbf{Q}(\mathbf{1}-\mathbf{w})\big)\Big)\bigg)\bigg) \nonumber \\
    &+ \sum_{o=1}^{O} \bigg(1 - \mathrm{min}_1 \big( \matr{L}_o(1-\matr{v}) + \matr{L}_o \matr{ES}^{L_o} \matr{z} \big)\bigg)
\end{align}
While this function is almost everywhere differentiable and we can indeed use gradient-based methods to search for stable models, it is still an open question whether this continuous method is more efficient than the discrete method commonly used in existing solvers.

\end{document}